\definecolor{mygreen}{rgb}{0,0.6,0}
\definecolor{mygray}{rgb}{0.5,0.5,0.5}
\definecolor{mymauve}{rgb}{0.58,0,0.82}
\newtheorem{assumption}{Assumption}[section]
\newtheorem{lemma}{Lemma}[section]
\newtheorem{proposition}{Proposition}[section]
\newtheorem{remark}{Remark}[section]
\begin{document}

\begin{frontmatter}

\title{Feature selection with optimal coordinate ascent (OCA)}
\tnotetext[mytitlenote]{Fully documented templates are available in the elsarticle package on \href{http://www.ctan.org/tex-archive/macros/latex/contrib/elsarticle}{CTAN}.}

\author[AISquareConnect,LISIC]{David Saltiel}
\author[AISquareConnect,LAMSADE]{Eric Benhamou}

\address[AISquareConnect]{A.I. SQUARE CONNECT, 35 Boulevard d'Inkermann, 92200 Neuilly sur Seine, France}
\address[LISIC]{LISIC - Universite du Littoral - Cote d’Opale, France}
\address[LAMSADE]{LAMSADE, Université Paris Dauphine, \\ Place du Maréchal de Lattre de Tassigny,75016 Paris, France}

\begin{abstract}
In machine learning, Feature Selection (FS) is a major part of efficient algorithm. It fuels the algorithm and is the starting block for our prediction. In this paper, we present a new method, called Optimal Coordinate Ascent (OCA) that allows us selecting features among block and individual features. OCA relies on coordinate ascent to find an optimal solution for gradient boosting methods score (number of correctly classified samples). OCA takes into account the notion of dependencies between variables forming blocks in our optimization. The coordinate ascent optimization solves the issue of the NP hard original problem where the number of combinations rapidly explode making a grid search unfeasible. It reduces considerably the number of iterations changing this NP hard problem into a polynomial search one. OCA brings substantial differences and improvements compared to previous coordinate ascent feature selection method: we group variables into block and individual variables instead of a binary selection. Our initial guess is based on the k-best group variables making our initial point more robust. We also introduced new stopping criteria making our optimization faster. We compare these two methods on our data set. We found that our method outperforms the initial one. We also compare our method to the Recursive Feature Elimination (RFE) method and find that OCA leads to the minimum feature set with the highest score. This is a nice byproduct of our method as it provides empirically the most compact data set with optimal performance.
\end{abstract}

\begin{keyword}
feature selection, coordinate ascent, gradient boosting method
\MSC[2010] 68T01, 68T05
\end{keyword}

\end{frontmatter}


\section{Introduction}
Feature selection is also known as variable or attribute selection. It is the selection of a subset of relevant attributes in our data that are most relevant to our predictive modeling problem. 
It has been an active and fruitful field of research and development for decades in statistical learning.
It has proven to be effective and useful in both theory and practice for many reasons: 
enhanced learning efficiency and increasing predictive accuracy (see \cite{Mitra_2002}), model simplification to ease its interpretation and improve performance (see \cite{Almuallim_1994}, \cite{Koller_1996} and \cite{Blum_1997}), shorter training time (see \cite{Mitra_2002}), curse of dimensionality avoidance, enhanced generalization with reduced overfitting, implied variance reduction. Both \cite{HastieEtAl_2009} and \cite{Guyon_2003} are nice references to get an overview of various methods to tackle features selections. 
The approaches followed varies. Briefly speaking, the methods can be sorted into three main categories: Filter method, Wrapper methods and Embedded methods. We developed these three categories in the following section.

%

\subsection{Features selection methods}
\subsubsection{Filter methods}
Filter type methods select variables regardless of the model. These methods suppress the least interesting variables by using ranking techniques as a criteria to select the variables. Once the ranking is done, a threshold is determined in order to select features above it. These methods are very effective in terms of computation time and robust to overfitting. By construction, filter methods may select redundant variables as they do not consider the relationships between variables. To stress this last point, we can present one of the most known criteria, the Pearson correlation coefficient, which is simply the ratio between the covariance and the square root of the two variances: $\operatorname{Cov}(x_i,y) / {\sqrt{\operatorname{Var}(x_i) \operatorname{Var}(y)}}$ with $x_i$ the $ i^{th}$ feature in the model and y the label associated. It is well known that this correlation ranking can only detect linear dependencies between features ant the target label. 

\subsubsection{Wrappers methods}
Wrapper methods evaluate subsets of variables. They thus allow detecting possible interactions between variables. In wrapper methods, a model must be trained to test any subsequent feature subset. Consequently, these methods are iterative and computationally expensive. However, these methods can identify the best performing features set for that specific modeling algorithm. Some known examples of wrapper methods are forward and backward feature selection methods.\\
The backward elimination starts with all features and progressively remove them. At the opposite, the forward selection starts with an empty set and progressively add them. \\
If we have $n$ features, we need to train $n$ classifiers for the first step, then $n-1$ classifiers for the second step and so on. We then have $\frac{n(n+1)}{2}$ training steps for both methods. However, forward selection starts with small features subsets so it can be computationally cheaper if the stopping condition is satisfied early.
One of the State of the art wrappers method is Recursive Feature Elimination (RFE) (see for instance \cite{Mangal_2018} for more details). It first fits a model and removes features  until a pre-determined number of features. Features are ranked through an external model that assigns weights to each features and RFE recursively eliminates features with the least weight at each iteration. One of the main limitation to RFE is that it requires the number of features to keep. This is hard to guess a priori and one may need to iterate much more than the desired number of feature to find an optimal feature set.

\subsubsection{Embedded methods}
Embedded method perform feature selection as a part of the modeling algorithm’s execution. Many hybrid methods are developed to combine the advantages of wrappers and filters methods 

\section{Result of convergence}
In order to motivate our method that relies on coordinate ascent, we recall some theoretical results about the convergence of coordinate ascent optimization. The theory is well understood for the convex case, see \cite{Wright_2015}. The non convex case without gradient which is our example is however much harder as we have local minima issue and mathematical assumptions too weak to be able to prove convergence. However, convergence results under strong convex conditions provide some hint about the efficiency of this method and its convergence rate that is linear. Our proof provided in appendix section is inspired by \cite{Nesterov_2012} with a slight modification as we start by the critical point condition. We also provide the various building block lemma to achieve this proof rapidly. In order to have some meaningful result, we need to make some necessary assumptions for our function $f$ to be minimized. Obviously, even if our final problem is a maximization, it is trivial to turn the minimization program into a maximization one by taking the opposite of the objective function. In this section, we stick to the traditional presentation and examine minimization to make proof reading easier. We examine the following optimization program:
\begin{equation}
\underset{x}{\min} f(x)
\end{equation}

We denote by $e_i$ the traditional vector with $0$ for any coordinate except $1$  for coordinate $i$. It is the vector of the canonical basis.

\begin{assumption} \label{assumption_convergence_prop}
We assume our function $f$ is twice differentiable and strongly convex with respect to the Euclidean norm:
\begin{equation}
\label{strong_convexity}
f(y) \geq f(x) + \nabla f(x)^T (y-x) + \frac{\sigma}{2} \left\| y-x \right\|_2^2 \ \text{ for some $\sigma > 0$ and any } x,y \in \mathbb{R}^n
\end{equation}
We also assume that each gradient's coordinate is uniformly $L_i$ Lipschitz, that is, there exists a constant $L_i$ such that for any $x \in \mathbb{R}^n, t \in \mathbb{R}$
\begin{equation}
\label{lipschitz}
\left| \left[\nabla f(x+t e_i)\right]_i - \left[\nabla f(x) \right]_i \right| \leq L_i \left| t \right|
\end{equation}
We denote by  $L_{\max}$ the maximum of these Lipschitz coefficients :
\begin{equation}
\label{Lmax}
L_{\text{max}} = \underset{i = 1 \ldots n}{max} \ L_i
\end{equation}

We assume that the minimum of $f$ denoted by $f^\star$ is attainable and that the left value of the epigraph with respect to our initial starting point $x_0$ is bounded, that is
\begin{equation}
\label{R0}
\underset{x}{\max} \left\{ \left\|  x - x^\star \right\| : f(x) \leq f(x_0) \right\}  \leq R_0
\end{equation}
\end{assumption}

\begin{remark}
Strong convexity means that the function is between two parabolas. Condition \ref{lipschitz} implies that the Gradient's growth is at most linear. Inequality \ref{R0} States that the function is increasing at infinity.
\end{remark}

\begin{proposition} \label{proposition1}
Under assumption \ref{assumption_convergence_prop}, coordinate ascent optimization (cf. Algorithm \ref{RCA}) converges to the global minimum $f^*$ at a linear rate proportional to $2n L_{\max} R_0^2$, that is
\begin{equation}
\label{theorem1}
\mathbb{E}[ f\left( x_k \right) ] - f^\star \leq \frac{2nL_{\text{max}}R_0^2}{k}
\end{equation}
In addition, for $\sigma > 0$, we have
\begin{equation}
\label{theorem2}
\mathbb{E}[ f\left( x_k \right) ] - f^\star \leq \left( 1 - \frac{ \sigma}{n L_{\max}}\right)^k (f(x_0)-f^\star)
\end{equation}
\end{proposition}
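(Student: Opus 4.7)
The plan is to follow the now-standard Nesterov-style argument for randomized coordinate descent, which amounts to three building blocks: a one-step descent lemma coming from the coordinate-wise Lipschitz assumption \eqref{lipschitz}, an expectation step over the random coordinate draw, and finally a Gronwall-type recursion combined with either plain convexity (for the $1/k$ rate) or strong convexity (for the geometric rate).

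First I would establish the per-coordinate descent lemma. Integrating \eqref{lipschitz} along the segment $[x, x+te_i]$ gives the one-dimensional upper bound $f(x+te_i) \leq f(x) + t\,[\nabla f(x)]_i + \tfrac{L_i}{2} t^2$. Choosing the minimizing step $t^\star = -[\nabla f(x)]_i/L_i$ yields
\begin{equation*}
f(x - \tfrac{1}{L_i}[\nabla f(x)]_i e_i) \leq f(x) - \frac{1}{2L_i} [\nabla f(x)]_i^2 \leq f(x) - \frac{1}{2L_{\max}} [\nabla f(x)]_i^2.
\end{equation*}
Writing $x_{k+1}$ for the outcome of Algorithm \ref{RCA} at step $k$ with the index $i$ drawn uniformly on $\{1,\dots,n\}$, taking conditional expectation with respect to $i$ gives
\begin{equation*}
\mathbb{E}\bigl[f(x_{k+1}) \mid x_k\bigr] \leq f(x_k) - \frac{1}{2nL_{\max}} \|\nabla f(x_k)\|_2^2 .
\end{equation*}
Taking total expectation and denoting $\phi_k = \mathbb{E}[f(x_k)] - f^\star$ turns this into the master recursion $\phi_{k+1} \leq \phi_k - \frac{1}{2nL_{\max}} \mathbb{E}\|\nabla f(x_k)\|_2^2$.

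For \eqref{theorem1} I would now exploit plain convexity plus the bounded level-set assumption \eqref{R0}. Convexity gives $f(x_k) - f^\star \leq \nabla f(x_k)^T(x_k - x^\star) \leq \|\nabla f(x_k)\|_2 \cdot R_0$ (the iterates stay in the level set of $x_0$ since each coordinate step is monotone decreasing), so $\|\nabla f(x_k)\|_2^2 \geq (f(x_k)-f^\star)^2 / R_0^2$. Plugging into the master recursion yields $\phi_{k+1} \leq \phi_k - \phi_k^2 / (2nL_{\max} R_0^2)$, and the classical inverse-telescoping trick $\frac{1}{\phi_{k+1}} - \frac{1}{\phi_k} = \frac{\phi_k - \phi_{k+1}}{\phi_k \phi_{k+1}} \geq \frac{1}{2nL_{\max} R_0^2}$ (using $\phi_{k+1} \leq \phi_k$) gives after summation the bound $\phi_k \leq 2nL_{\max} R_0^2 / k$.

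For \eqref{theorem2} I would use strong convexity \eqref{strong_convexity} directly. Minimizing the right-hand side of \eqref{strong_convexity} over $y$ (the minimum is attained at $y = x - \nabla f(x)/\sigma$) yields the Polyak--Łojasiewicz inequality $f^\star \geq f(x) - \tfrac{1}{2\sigma}\|\nabla f(x)\|_2^2$, i.e. $\|\nabla f(x_k)\|_2^2 \geq 2\sigma(f(x_k)-f^\star)$. Substituting into the master recursion produces $\phi_{k+1} \leq (1 - \sigma/(nL_{\max}))\,\phi_k$, and iterating over $k$ gives the claimed geometric rate.

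The genuinely delicate point, and the only step where I expect to slow down, is the technical justification that the iterates remain inside the level set $\{x : f(x) \leq f(x_0)\}$ so that assumption \eqref{R0} can be applied pathwise to bound $\|x_k - x^\star\|$ by $R_0$; this is immediate once one observes that the one-step descent lemma guarantees $f(x_{k+1}) \leq f(x_k)$ almost surely, but it must be stated carefully before taking expectations. Everything else is essentially the arithmetic of chaining the descent lemma with the appropriate quadratic-growth inequality.
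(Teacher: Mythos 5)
Your proposal is correct and follows essentially the same route as the paper: a one-step descent lemma from the coordinate-wise Lipschitz bound \eqref{lipschitz}, expectation over the uniform coordinate draw, convexity together with \eqref{R0} for the $1/k$ rate, and the Polyak--{\L}ojasiewicz consequence of \eqref{strong_convexity} for the geometric rate. The only differences are cosmetic: you derive the descent lemma by integrating \eqref{lipschitz} instead of the paper's Taylor--Lagrange/Hessian argument, and you resolve the recursion $\phi_{k+1}\leq \phi_k - a\phi_k^2$ by inverse telescoping rather than by the paper's induction (Lemma \ref{lemma1}).
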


\begin{proof}
The proof is quite simple and given in \ref{proof1}.
\end{proof}

\begin{remark}
Our function to be maximize is obviously not convex. However, a linear rate in the convex case is rather a good performance for the ascent optimization method. Provided the method generalizes which is still under research, this convergence rate is a good hint of the efficiency of this method.
\end{remark}

\section{Method developed}
In many applications, we can regroup features among families. We call these features block variables. Typical example is to regroup variables that are observations of some physical quantity but at a different time (like the speed of the wind measure at different hours for some energy prediction problem, like the price of a stock in an algorithmic trading strategy for financial markets, like the temperature or heart beat of a patient at different time, etc ...). Formally, we can regroup our variables into two sets:
\begin{itemize}[label={\tiny\raisebox{1ex}{\textbullet}}]
    \item the first set encompasses $B_1 \ldots B_n$. These are called block variables of different length $L_i$. Mathematically, the Block variables are denoted by $B_i$ with $B_i$ taking value in $\mathbb{R}^{L_i}\  ,\forall i \in 1 \ldots n$
    \item the second set is denoted $S$ and is a block of $p$ single variables.
\end{itemize}

Graphically, our variables looks like that:
\small
\begin{center}
$
\begin{pmatrix}
	\overbrace{
	\begin{matrix}
	    B_{1,1} & \ldots \ldots & B_{1,n} \\
	    \hline
	    \bullet   & \ldots \ldots & \bullet  \\ 
	     \vdots &  & \vdots  \\ 
	    \bullet  & \ldots \ldots & \bullet 
	\end{matrix}
	}^{B_1}
	&
	\ldots \ldots
	& 
	\overbrace{
	\begin{matrix}
	    B_{n,1} & \ldots \ldots & B_{n,s} \\
	    \hline
	    \bullet  & \ldots \ldots & \bullet  \\
	    \vdots &  & \vdots \\ 
	    \bullet  & \ldots \ldots & \bullet 
	\end{matrix}
	}^{\text{$B_n$}}
	&

	& 
	\overbrace{
	\begin{matrix}
	    S_{1} & \ldots \ldots & S_{p} \\
	    \hline
	    \bullet  & \ldots \ldots & \bullet  \\ 
	    \vdots &  & \vdots \\ 
	    \bullet  & \ldots \ldots & \bullet 
	\end{matrix}
	}^{\text{S}}
\end{pmatrix}
$
\end{center}
\normalsize

In addition, we have $N$ variables split between block variables and single variables, hence $N = N_B +p$ with $N_B = \sum_{i=1}^n L_i$.

Our algorithm works as follows. We first fit our classification model to find a ranking of features importance. The performance is computed with the Gini index for each variable. We then keep the first $k$ best ranked features for each blocks $B_1 \ldots B_n$ in order to find the best initial guess for our coordinate ascent algorithm. Notice that the set of unique variables is not modified during the first step of the procedure. The objective function is the number of  correctly classified samples at each iteration.
We then enter the main loop of the algorithm. Starting with the vector of $\left(k,\ldots ,k, \ \mathbbm{1}_p ^T \right) $ as the initial guess for our algorithm, we perform our coordinate ascent optimization in order to find the set with optimal score and the minimum number of features. The coordinate ascent loop stops whenever we either reach the maximum number of iterations or the current optimal solution has not moved between two steps.

We summarize the algorithm in the pseudo code \ref{OCA}. We denote by $\varepsilon$ the tolerance for the convergence stopping condition. To control early stop, we use a precision variable denoted by $\varepsilon_1, \varepsilon_2$ and two iteration maximum $\text{Iteration max}_1$ and $\text{Iteration max}_2$ that are initialized before starting the algorithm. We also denote $\text{Score}( k_1,\  \ldots ,\  k_n,\ \mathbbm{1}_p)$ to be the accuracy score of our classifier with each $B_i$ block of variables retaining $k_i$ best variables and with single variable all retained.

\begin{algorithm}[H]
\caption{OCA algorithm}\label{OCA}
\begin{algorithmic} 
\State \textbf{J Best optimization}
\State We retrieve features importance from a fitted model
\State We find the index $k^\star$ that gives the best score for variables block of same size $k$: 
\State $k^\star \in \underset{k \in \mathbb{R}^{L_{\text{min}}}}{\text{argmax }} \text{Score}\left( k,\  \ldots ,\  k,\ \mathbbm{1}_p \right)$  \Comment{$L_{\text{min}} =\underset{i \in \mathbb{R}^n}{\text{min }}L_i $}
\State Initial guess : $x^0 = \left(k^\star, \ldots, k^{\star}, \mathbbm{1}_p  \right)$ 
\While{$\text{Score}( x^i )- \text{Score}( x^{i-1} ) | \geq \varepsilon_1  \text{ and  } i  \leq \text{Iteration max}_1 $}
	\State $ x_1^{i} \in \underset{j \in \mathbb{R}^{L_1}}{\text{argmax}} \text{ Score}\left( j,\  x_2^{i-1},\  x_3^{i-1} ,\  \ldots\ ,\  x_n^{i-1}, \mathbbm{1}_p\right)$
	\State ...
	\State $ x_n^{i} \in \underset{j \in \mathbb{R}^{L_n}}{\text{argmax}} \text{ Score}\left( x_1^{i},\  x_2^{i},\  x_3^{i} ,\ \ldots\ , j, \mathbbm{1}_p  \right)$
\vspace{0.1cm}
	\State i += 1
\EndWhile
\State
\State \textbf{Full coordinate ascent optimization}
\State Use previous solutions: $X^* = ( x_1^i, \ldots, x_n^{i},  \mathbbm{1}_p)$ \Comment{i is the last index in previous while loop}
\State $Y^* = \text{Score}\left(X^*\right)$ 

\While{ $|Y-Y^*| \geq \varepsilon_2$ and iteration $\leq \text{Iteration max}_2$ }
	\For{i=1 \ldots N}
		\State $ X = X^* $
		\State $ X_i = $not$\left(X_i^*\right) $			\Comment{not(0) = 1 and not(1) = 0}
		\If{$ \text{Score}\left(X\right) \geq  \text{Score}\left(X^*\right)$} 
			\State $ X^* = X $
		\EndIf
	\EndFor
	\State $ Y = \text{Score}\left(X^*\right) $
	\State iteration += 1
\EndWhile
\State Return $X^*, Y^*$
\end{algorithmic}
\end{algorithm}

\begin{remark}
The originality of this coordinate ascent optimization is to regroup variable by block, hence it reduces the number of iterations compared to Binary Coordinate Ascent (BCA) as presented in \cite{ZARSHENAS_2016}
The stopping condition can be changed to accommodate for other stopping conditions.
\end{remark}

\begin{remark}
There are many variants to this algorithm. It can be modified by using a randomized coordinate ascent. In this case, we choose the index randomly at each step instead of using the provided order. The pseudo code is listed below:
\begin{algorithm}
\caption{Randomized Coordinate ascent :}
\label{RCA}
\begin{algorithmic} 
\State \textbf{Initialization}
\State Start with $x_0 \in \mathbb{R}^n$ 
\State Set $k =  0$
\While{stop criteria not satisfied}
	\State Choose index $i_k$ uniformly distributed in  $\{ 1, \ldots,n \}$ independently from prior iteration
	\State Set $x_{k+1} = x_k - \alpha_k \left[ \nabla f\left(x_k \right) \right]_{i_k} e_{i_k}$ for some $\alpha_k > 0$
	\State Set $k = k+1$
\EndWhile
\end{algorithmic}
\end{algorithm}
\end{remark}

\begin{remark}
The specificity of our method is to keep the $j$ best representative features for each feature class, as opposed to other methods that only select one representative feature from each group, ignoring the strong similarities between each feature of a given variable block. This takes in particular the opposite view of feature Selection with Ensembles, Artificial Variables, and Redundancy Elimination as developed in \cite{Tuv_2009}.
\end{remark}

\section{Numerical Results}
\subsection{Data set}
We use this algorithm to do a supervised classification of a data set obtained from financial markets trades that we want to classify according to some a priori features. We are given 
1500 trades with 135 features that can be classified into 5 block of 20 variables,1 block of 30 variables and 5 single variables. We know for each trade whether it is a good or bad trade. The idea is to use the minimum number of features to classify a priori this data set. We use cross validation with 70\% for the training set and 30\% for the test sets. For full reproducibility, full data set and corresponding python code for this algorithm is available publicly on \href{https://github.com/davidsaltiel/OCA.git}{github}.. The authors may further update the code to reflect improvements or typos if required. This code is provided as it is. The authors do not grant any warranty nor assume any liability for the content thereof. 

\subsection{Comparison}
We compare our method to two other methods that are supposed to be State of the art for feature selections, namely RFE and BCA. Our new method achieves a score of 62.80  \%  with 16\% of features used, to be compared to RFE that achieves 62.80 \%  with 19\% of features used. BCA performs poorly with a highest score given by 62.19 \% with 27\% of features used. If we take in terms of efficiency criterium, the highest score with the less feature, our method is the most efficient among these three methods. In comparison, with the same number of features, namely 16\%, RFE gets a score of 62.40 \%. All these figures are summarized in the table \ref{tab:comparison}.

\begin{table}[H]
	\centering
	\caption{Method Comparison: for each row, we provide in red the best(s) (hotest) method(s) and in blue the worst (coldest) method, while intermediate methods are in orange. We can notice that OCA achieves the higher score with the minimum feature sets. For the same feature set, RFE performs worst or equally, if we want the same performance for RFE, we need to have a larger feature set. BCA is the worst method both in terms of score and minimum feature set.}
	\resizebox{\textwidth}{!}{
	\begin{tabular}{|c|c|c|c|c|}
		\hline
		Method & OCA using 24 features & RFE using 24 features  & BCA using 39 features & RFE using 28 features\\
		\hline
		\% of features &\textcolor[rgb]{1,0, 0}{16.6} & \textcolor[rgb]{ 1,0,0}{16.6} & \textcolor[rgb]{ 0,0,0.8}{27.08}&\textcolor[rgb]{1,0.4, 0}{19.4}\\
		\hline
		Score (in \%) &\textcolor[rgb]{1, 0, 0}{62.8} & \textcolor[rgb]{1,0.4, 0}{62.39} & \textcolor[rgb]{0,0,0.8}{62.19}&\textcolor[rgb]{1,0,0}{62.8}\\
		\hline
	\end{tabular}
	}
	\label{tab:comparison}%
\end{table}%

\section{Discussion}
Compared to BCA our method reduces the number of iterations as it uses the fact that variables can be regrouped into categories or classes. Below is provided the number of iterations for OCA and BCA in figure \ref{convergence}. Our method requires only 350 iterations steps ton converge as opposed to BCA that needs up to 700 iterations steps as it computes blindly variables ignoring similarities between the different variables.

\begin{center}
\begin{figure}[H]
       \includegraphics[scale=0.5]{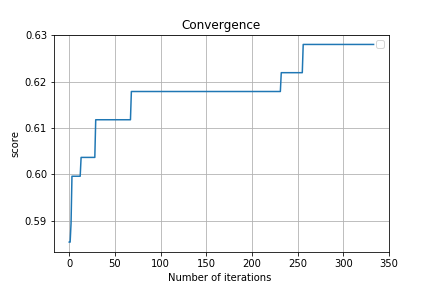}
	 \includegraphics[scale=0.5]{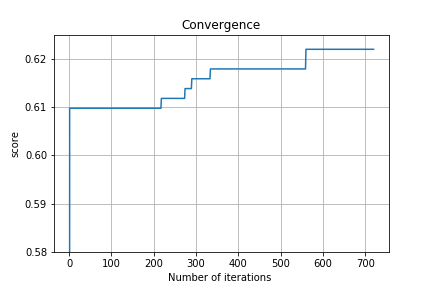}
	\caption{Iterations steps up to convergence for OCA and BCA. OCA method is on the left while BCA is on the right. We see that OCA requires around 350 iteration steps to converge while BCA requires the double around 700 iteration steps to converge}\label{convergence}
\end{figure}
\end{center}

Graphically, we can compute the best candidates for the four methods listed in table \ref{tab:comparison} in figure \ref{fig:comparison} and \ref{fig:comparison2}. We have taken the following color code. The hottest (or best performing) method is plotted in red, while the worst in blue. Average performing methods are plotted in orange. In order to compare finely OCA and RFE, we have plotted in figure \ref{fig:comparison2} the result of RFE for used features set percentage from 10 to 30 percent. We can notice that for the same feature set as OCA, RFE has a lower score and equally that to get the same score as OCA, RFE needs a large features set.

\begin{center}
	\captionof{figure}{Comparison between the 4 methods. To qualify the best method, it should be in the upper left corner. The desirable feature is to have as little features as possible and the highest score. We can see that the red cross that represents OCA is the best. The color code has been designed to ease readability. Red is the best, orange is a slightly lower performance while blue is the worst.}
       \includegraphics[scale=0.5]{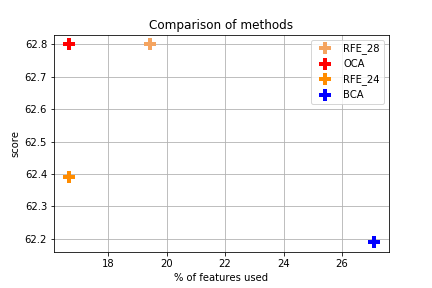}
	\label{fig:comparison}
\end{center}

\begin{center}
	\captionof{figure}{Comparison between OCA and RFE. Zoom on the methods. For RFE, we provide the score for various features set in blue. The two best RFE performers points are the orange cross marker points that are precisely the one listed in table \ref{tab:comparison}. The red cross marker point represents OCA. It achieves the best efficiency as it has the highest score and the smallest feature set for this score.}
       \includegraphics[scale=0.5]{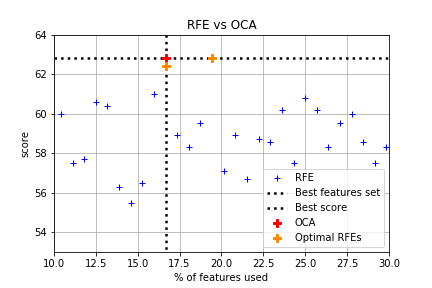}
	\label{fig:comparison2}
\end{center}

\section{Conclusion}
In this paper, we have presented a new method, called Optimal Coordinate Ascent (OCA) that allows us selecting features among block and individual features. OCA relies on coordinate ascent to find an optimal solution for gradient boosting methods score (number of correctly classified samples). OCA takes into account the notion of dependencies between variables forming blocks in our optimization. The coordinate ascent optimization solves the issue of the NP hard original problem where the number of combinations rapidly explode making a grid search unfeasible. It transforms  the NP hard problem of finding the best features into a polynomial search one. Comparing result with two other methods Binary Coordinate Ascent (BCA) and Recursive Feature Elimination (RFE), we find that OCA leads to the minimum feature set with the highest score. OCA provides empirically the most compact data set with optimal performance. Obtaining a reduced features set compared to other method is highly desirable for at least two reasons: First, a lower feature set should have a stronger generalization power as it has less noise created by too many variables (in a similar way in a sense as the Lasso method that eliminates variables in regression). Second, fewer features leads to smaller memory size model and faster computation. Possible extension is to parallelize and potentially use GPU acceleration for this algorithm to leverage its strong decoupling when examining candidate solutions.
\newpage

\appendix

\section{Proofs}

\subsection{Proof of proposition \ref{proposition1} \label{proof1}}
In order to prove result, we first start by a simple lemma.

\begin{lemma}
\label{lemma1}
If $\left(u_n\right)_{n \in \mathbf{N}}$ is non increasing such that $ u_n - u_{n+1} \geq a u_n^2$ with $a >0$\\,
then $u_n \leq \frac{1}{n a}$
\end{lemma}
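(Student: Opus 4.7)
The plan is to transform the given super-quadratic decrement condition on $u_n$ into a sub-linear growth condition on the reciprocal $1/u_n$, then telescope.

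First I would dispose of the sign issue. Since $u_n - u_{n+1} \geq a u_n^2 \geq 0$, non-negativity of $u_n - u_{n+1}$ is already built in (and is also given by the non-increasing hypothesis). If $u_n \leq 0$ for some $n \geq 1$, the conclusion $u_n \leq 1/(na)$ is trivial since $1/(na) > 0$. So I may assume throughout that $u_n > 0$ for every $n$, which makes the reciprocal $1/u_n$ well defined.

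Next comes the key algebraic step. I would compute
\begin{equation*}
\frac{1}{u_{n+1}} - \frac{1}{u_n} \;=\; \frac{u_n - u_{n+1}}{u_n u_{n+1}} \;\geq\; \frac{a u_n^2}{u_n u_{n+1}} \;=\; a\,\frac{u_n}{u_{n+1}} \;\geq\; a,
\end{equation*}
where the first inequality uses the hypothesis $u_n - u_{n+1} \geq a u_n^2$ and the last uses the fact that $u_n \geq u_{n+1} > 0$ (non-increasing and positive). So the reciprocal sequence grows by at least $a$ at each step.

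Finally I would telescope this inequality from index $0$ to $n-1$ to obtain
\begin{equation*}
\frac{1}{u_n} - \frac{1}{u_0} \;=\; \sum_{k=0}^{n-1}\left(\frac{1}{u_{k+1}} - \frac{1}{u_k}\right) \;\geq\; n a.
\end{equation*}
Since $1/u_0 > 0$, this gives $1/u_n \geq na$, hence $u_n \leq 1/(na)$, as claimed. There is no real obstacle here: the only moderately subtle point is justifying positivity of the sequence so that reciprocals and the division in the key step are legitimate, and I handled that by the trivial sign dichotomy above.
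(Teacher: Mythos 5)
Your proof is correct, but it takes a genuinely different route from the paper's. The paper argues by induction on $n$: it rewrites the hypothesis as $u_{n+1} \leq f(u_n)$ with $f(x) = x(1-ax)$, studies the variations of this parabola (maximum $\frac{1}{4a}$ at $x=\frac{1}{2a}$), uses the global maximum to settle the base cases $k \leq 4$, and then pushes the inductive bound through the monotone part of $f$ via $\frac{k-1}{k^2} \leq \frac{1}{k+1}$. You instead pass to the reciprocal sequence, show $\frac{1}{u_{n+1}} - \frac{1}{u_n} \geq a\,\frac{u_n}{u_{n+1}} \geq a$, and telescope; this is the classical argument for such recursions, it avoids the case analysis on small $k$ entirely, and it actually yields the slightly stronger conclusion $\frac{1}{u_n} \geq \frac{1}{u_0} + na$. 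The one point where your write-up is a little loose is the sign dichotomy: the clean way to say it is that for a \emph{fixed} $n$, either $u_n \leq 0$ and the bound is immediate, or $u_n > 0$ and then monotonicity forces $u_0 \geq u_1 \geq \cdots \geq u_n > 0$, which is exactly the positivity needed to justify every reciprocal appearing in the telescoping sum up to index $n$. With that phrasing the argument is airtight.
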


\begin{proof}
We remark that $ u_{n+1} \leq  u_n - a u_n^2 $ or equivalently $u_{n+1} \leq u_n \left( 1 - a u_n \right)$, which says that $u_{n+1}$ is bounded by a lower parabola. Let $f : x \rightarrow x \ (1-ax)$ be this parabola. $f$'s variation are easy to study and given below (with $f'(x) = 1 - 2 a x$):
\begin{center}
\begin{tikzpicture}
   \tkzTabInit{$x$ / 1 , $f'(x)$ / 1,$f(x)$/1.5}{$-\infty$, $\frac{1}{2a}$, $+\infty$}
   \tkzTabLine{, +, 0, -, }
   \tkzTabVar{-/ , +/$ \frac{1}{4a}$, -/  }
\end{tikzpicture}
\end{center}

We can now trivially prove our result by induction. The initialization step is obvious for $k \leq 4$ as $u_k \leq \frac{1}{ka}$ since the global maximum of our parabola is $1 / 4a$ which is less than $1 / ka$ for $k \leq 4$. If the result holds for $k \geq 2$, we know that the maximum of the parabola ($f(U_k)$) is attained in $\frac{1}{ka}$ since $\frac{1}{ka} \leq \frac{1}{2a}$ . 
This implies that 
\begin{eqnarray*}
 u_{k+1}   &  \leq  & \frac{1}{ka} - a \frac{1}{(ka)^2  }  \qquad \text{or} \qquad u_{k+1}  \leq   \frac{k-1}{k^2a}
\end{eqnarray*}
We can trivially conclude as $\frac{k-1}{k^2} \leq \frac{k-1}{k^2-1} = \frac{1}{k+1}$
\end{proof}

\begin{proof}
\noindent We can now prove our main result. By assumptions, we do a gradient descent according to one coordinate: $x_{k+1} = x_k - \alpha_k \left[ \nabla f \left( x_k \right)  \right]_{i_k} e_{i_k}$ for some $  \alpha_k \geq 0 $. A Taylor-Lagrange expansion for $f(x_{k+1})$'s gradient gives us:
\begin{eqnarray}
  \nabla  f \left( x_{k+1} \right)   & \triangleq  & \nabla  f \left( x_k - \alpha_k \left[ \nabla f \left( x_k \right)  \right]_{i_k} e_{i_k} \right)  \\
   & = &   \nabla  f \left( x_k\right) - \langle \alpha_k \left[ \nabla f \left( x_k \right)  \right]_{i_k} e_{i_k} \  , \  \mathcal{H} f\left( \theta_k x_k  + \left( 1 - \theta_k \right ) x_{k+1}\right) \rangle  \text{ for $ \theta_k \in \left]0,1\right[$ } \label{test}
\end{eqnarray}

\noindent We denote $C_k = \theta_k x_k  + \left( 1 - \theta_k \right ) x_{k+1}$. We want $  \alpha_k$ such that $ \nabla  f \left( x_{k+1} \right) = 0$. Combined with \eqref{test}, we have the following equality: \\
$\alpha_k \langle  \left[ \nabla f \left( x_k \right)  \right]_{i_k} e_{i_k} \  , \  \mathcal{H} f\left( C_k\right) \rangle  = \nabla  f \left( x_k\right)$  \\

\noindent Taking the norm and using the Cauchy Schwarz inequality, we have that : \\
$ \left\| \nabla f \left( x_k \right)  \right\| \leq |\alpha_k| \left\| \left[\nabla f \left( x_k \right) \right]_{i_k} e_{i_k} \right\| \, \left\| \mathcal{H} f \left( C_k \right)  \right\|$ 

\noindent Bounding the Hessian from\eqref{lipschitz} and using equation \eqref{Lmax}, we have:  $ \left\| \nabla f \left( x_k \right)  \right\| \leq |\alpha_k| \left\| \nabla f \left( x_k \right)  \right\|  \, L_{\max}$

\noindent Assuming that the objective function minimum is not attained at step k, $ \left\| \nabla f \left( x_k \right)  \right\| \neq 0 $, we have:
$\frac{1}{L_{\max}} \leq \alpha_k  $. We precisely take this critical value $1 / L_{\max}$ for $\alpha_k$ at each step k in order to avoid a step too large to prevent oscillation phenomena. Our recursive relationship is now:
\begin{equation}
x_{k+1} = x_k - \frac{1}{L_{\max}} \left[\nabla f\left(x_k\right)\right]_{i_k} e_{i_k} \label{step_Lmax}
\end{equation}

\noindent Using the fact that $Var \left(  \left\| \nabla f \left( x_k \right)  \right\|  \right) \geq 0$, we can conclude that
\begin{flalign}
&\mathbb{E}  \left[  \left\| \nabla f \left( x_k \right)  \right\|^2  \right] \geq  \mathbb{E}  \left[  \left\| \nabla f \left( x_k \right)  \right\|  \right]^2   \label{inequality_expectation}
\end{flalign} 

\noindent By convexity of $f$, we have 
\begin{eqnarray}
f(x_k)-f\left(x^\star \right) &\leq& \langle \nabla f \left( x_k \right) \ , \ x_k - x^\star  \rangle \\
&\leq&  \left\| \nabla f \left( x_k \right)  \right\|  \left\| x_k - x^\star  \right\| \text{ (By Cauchy Schwartz)} \\
&\leq&  \left\| \nabla f \left( x_k \right)  \right\| R_0 \text{ (By \ref{R0} )} \label{main_proof_eq1}
\end{eqnarray}

\noindent We denote $U_k = \mathbb{E}\left[f \left(x_k \right)\right] - f \left(x^\star \right)$. Taking the expectation over all the random index $i_k$  in \ref{main_proof_eq1}, we obtain : 
\begin{equation}
U_k \leq  \mathbb{E}\left[  \left\| \nabla f \left( x_k \right)  \right\|\right] R_0
\end{equation}

\noindent $U_k \geq 0 $ for any k by definition of $x^\star$ which is the minima of f. So, taking the square value on both side of the inequality, we have 
\begin{eqnarray}
U_k^2 &\leq&  \mathbb{E}\left[  \left\| \nabla f \left( x_k \right)  \right\|\right]^2 R_0^2 \\
&\leq&  \mathbb{E}\left[  \left\| \nabla f \left( x_k \right)  \right\|^2 \right] R_0^2 \text{ by \ref{inequality_expectation}} \label{main_proof_inequality1}
\end{eqnarray}

\noindent Using the relation in \ref{step_Lmax}, we apply Taylor-Lagrange to our objective function f at step k+1 :
\begin{eqnarray}
f \left(x_{k+1}\right) &=& f \left(x_k\right) - \frac{1}{L_{\max}} \langle \left[ \nabla f\left(x_k \right) \right]_{i_k} e_{i_k} , \nabla f\left(x_k \right) \rangle \\
&+& \frac{1}{2} \left(\frac{1}{L_{\max}}\right)^2 \left(  \left[ \nabla f\left(x_k \right) \right]_{i_k} e_{i_k}  \right)^T \mathcal{H} f \left(d_k\right) \left(  \left[ \nabla f\left(x_k \right) \right]_{i_k} e_{i_k}  \right)
\end{eqnarray}

\noindent with $d_k \in \left] x_k , x_{k+1} \right[$. Therefore:
\begin{eqnarray}
 f \left( x_{k+1} \right)  &\leq&   f \left( x_k \right) - \frac{1}{L_{\max}}   \left\| \left[\nabla f \left( x_k \right)\right]_{i_k}  \right\|^2  \\
&&+ \frac{1}{2} \frac{1}{L_{\max}^2}  \left\| \left[\nabla f \left( x_k \right)\right]_{i_k}  \right\|^2   L_{\max} \\
&\leq&  f \left( x_k \right)  - \frac{1}{2 L_{\max}} \left\| \left[\nabla f \left( x_k \right)\right]_{i_k}  \right\|^2  
\end{eqnarray}

\noindent Taking the expectation over all the random indexes $i_k$, we have :
\begin{eqnarray}
\mathbb{E}\left[  \mathbb{E}_{i_k}\left[ f \left( x_{k+1} \right) \right]  \right] &\leq& \mathbb{E}\left[  \mathbb{E}_{i_k}\left[  f \left( x_k \right)  - \frac{1}{2 L_{\max}} \frac{1}{n} \sum_{i=1}^n \left[\nabla f \left( x_k \right)\right]_i^2  \right]  \right]  \\
\mathbb{E}\left[ f \left( x_{k+1} \right)\right]  &\leq& \mathbb{E}\left[  f \left( x_k \right) \right]  - \frac{1}{2n L_{\max}} \mathbb{E}\left[  \left\|  \left[\nabla f \left( x_k \right)\right] \right\|^2 \right]
\end{eqnarray}

\noindent Subtracting $f\left(x^\star \right)$ on both side, we obtain the main recursive relation between $U_k$ and $U_{k+1}$ :
\begin{equation}\label{main_result}
U_{k+1} \leq U_k - \frac{1}{2n L_{\max}}  \mathbb{E}\left[ \left\| \left[\nabla f \left( x_k \right)\right]  \right\|^2 \right]
\end{equation}

\noindent Using the inequality \ref{main_proof_inequality1}, we ensure that :
\begin{equation}
U_{k+1} \leq U_k - \frac{1}{2n L_{\max}}  \frac{1}{R_0^2} U_k^2
\end{equation}

\noindent We can conclude using lemma \ref{lemma1} to get $U_k \leq \frac{2n L_{\max}}{k}$ so that the first result of proposition \ref{theorem1} holds. 

\noindent The second result to prove \ref{theorem2} is also easy. Taking the minimum of both sides of \ref{strong_convexity} leads to $f^{\star} \geq f(x_k) - \frac{1}{2 \sigma} \left| \nabla f \left( x_k \right)\right|^2$ or equivalently $\left| \nabla f \left( x_k \right)\right|^2 \geq 2 \sigma U_{k} $. Then using \ref{main_result}, we get 
$$
U_{k+1} \leq U_k - \frac{\sigma}{n L_{\max}}  U_k = (1-\frac{\sigma}{n L_{\max}}) U_k.
$$
The result is trivially obtained by applying this formula recursively.
\end{proof}

\newpage

\bibliography{mybibfile}

\begin{thebibliography}{11}
\expandafter\ifx\csname natexlab\endcsname\relax\def\natexlab#1{#1}\fi
\providecommand{\url}[1]{\texttt{#1}}
\providecommand{\href}[2]{#2}
\providecommand{\path}[1]{#1}
\providecommand{\DOIprefix}{doi:}
\providecommand{\ArXivprefix}{arXiv:}
\providecommand{\URLprefix}{URL: }
\providecommand{\Pubmedprefix}{pmid:}
\providecommand{\doi}[1]{\href{http://dx.doi.org/#1}{\path{#1}}}
\providecommand{\Pubmed}[1]{\href{pmid:#1}{\path{#1}}}
\providecommand{\bibinfo}[2]{#2}
\ifx\xfnm\relax \def\xfnm[#1]{\unskip,\space#1}\fi
\bibitem[{Almuallim and Dietterich(1994)}]{Almuallim_1994}
\bibinfo{author}{Almuallim, H.}, \bibinfo{author}{Dietterich, T.G.},
  \bibinfo{year}{1994}.
\newblock \bibinfo{title}{Learning boolean concepts in the presence of many
  irrelevant features}.
\newblock \bibinfo{journal}{Artificial Intelligence} \bibinfo{volume}{69},
  \bibinfo{pages}{279--305}.
\bibitem[{Blum and Langley(1997)}]{Blum_1997}
\bibinfo{author}{Blum, A.L.}, \bibinfo{author}{Langley, P.},
  \bibinfo{year}{1997}.
\newblock \bibinfo{title}{Selection of relevant features and examples in
  machine learning}.
\newblock \bibinfo{journal}{Artif. Intell.} \bibinfo{volume}{97},
  \bibinfo{pages}{245--271}.
\bibitem[{Guyon and Elisseeff(2003)}]{Guyon_2003}
\bibinfo{author}{Guyon, I.}, \bibinfo{author}{Elisseeff, A.},
  \bibinfo{year}{2003}.
\newblock \bibinfo{title}{An introduction to variable and feature selection}.
\newblock \bibinfo{journal}{J. Mach. Learn. Res.} \bibinfo{volume}{3},
  \bibinfo{pages}{1157--1182}.
\bibitem[{Hastie et~al.(2009)Hastie, Tibshirani and Friedman}]{HastieEtAl_2009}
\bibinfo{author}{Hastie, T.}, \bibinfo{author}{Tibshirani, R.},
  \bibinfo{author}{Friedman, J.H.}, \bibinfo{year}{2009}.
\newblock \bibinfo{title}{The elements of statistical learning: data mining,
  inference, and prediction, 2nd Edition}.
\newblock Springer series in statistics, \bibinfo{publisher}{Springer}.
\bibitem[{Koller and Sahami(1996)}]{Koller_1996}
\bibinfo{author}{Koller, D.}, \bibinfo{author}{Sahami, M.},
  \bibinfo{year}{1996}.
\newblock \bibinfo{title}{Toward optimal feature selection}, in:
  \bibinfo{booktitle}{Proceedings of the Thirteenth International Conference on
  International Conference on Machine Learning}, \bibinfo{publisher}{Morgan
  Kaufmann Publishers Inc.}, \bibinfo{address}{San Francisco, CA, USA}. pp.
  \bibinfo{pages}{284--292}.
\bibitem[{{Mangal} and {Holm}(2018)}]{Mangal_2018}
\bibinfo{author}{{Mangal}, A.}, \bibinfo{author}{{Holm}, E.A.},
  \bibinfo{year}{2018}.
\newblock \bibinfo{title}{{A comparative study of feature selection methods for
  stress hotspot classification in materials}}.
\newblock \bibinfo{journal}{ArXiv e-prints} .
\bibitem[{Mitra et~al.(2002)Mitra, Murthy and Pal}]{Mitra_2002}
\bibinfo{author}{Mitra, P.}, \bibinfo{author}{Murthy, C.A.},
  \bibinfo{author}{Pal, S.K.}, \bibinfo{year}{2002}.
\newblock \bibinfo{title}{Unsupervised feature selection using feature
  similarity}.
\newblock \bibinfo{journal}{IEEE Trans. Pattern Anal. Mach. Intell.}
  \bibinfo{volume}{24}, \bibinfo{pages}{301--312}.
\bibitem[{Nesterov(2012)}]{Nesterov_2012}
\bibinfo{author}{Nesterov, Y.}, \bibinfo{year}{2012}.
\newblock \bibinfo{title}{Efficiency of coordinate descent methods on
  huge-scale optimization problems}.
\newblock \bibinfo{journal}{SIAM Journal on Optimization} \bibinfo{volume}{22},
  \bibinfo{pages}{341--362}.
\bibitem[{Tuv et~al.(2009)Tuv, Borisov, Runger and Torkkola}]{Tuv_2009}
\bibinfo{author}{Tuv, E.}, \bibinfo{author}{Borisov, A.},
  \bibinfo{author}{Runger, G.}, \bibinfo{author}{Torkkola, K.},
  \bibinfo{year}{2009}.
\newblock \bibinfo{title}{Feature selection with ensembles, artificial
  variables, and redundancy elimination}.
\newblock \bibinfo{journal}{J. Mach. Learn. Res.} \bibinfo{volume}{10},
  \bibinfo{pages}{1341--1366}.
\bibitem[{Wright(2015)}]{Wright_2015}
\bibinfo{author}{Wright, S.J.}, \bibinfo{year}{2015}.
\newblock \bibinfo{title}{Coordinate descent algorithms}.
\newblock \bibinfo{journal}{Math. Program.} \bibinfo{volume}{151},
  \bibinfo{pages}{3--34}.
\bibitem[{Zarshenas and Suzuki(2016)}]{ZARSHENAS_2016}
\bibinfo{author}{Zarshenas, A.}, \bibinfo{author}{Suzuki, K.},
  \bibinfo{year}{2016}.
\newblock \bibinfo{title}{Binary coordinate ascent: An efficient optimization
  technique for feature subset selection for machine learning}.
\newblock \bibinfo{journal}{Knowledge-Based Systems} \bibinfo{volume}{110},
  \bibinfo{pages}{191 -- 201}.

\end{thebibliography}

\end{document}